\newcommand{\PP}{\mathbb P} 
\newcommand{\R}{\mathbb R}
\newcommand{\rank}{\mbox{rank}}
\newcommand{\lra}{\longrightarrow}
\newcommand{\al}{\alpha}
\newcommand{\be}{\beta}
\newcommand{\ga}{\gamma}
\newcommand{\la}{\lambda}
\newtheorem{theorem}{Theorem}  
\newtheorem{corollary}[theorem]{Corollary}
\newtheorem{definition}[theorem]{Definition}
\newtheorem{lemma}[theorem]{Lemma}
\newtheorem{prop}[theorem]{Proposition}
\title{General Deformations of Point Configurations Viewed By a Pinhole Model Camera}
\author{Yirmeyahu J. Kaminski\\
Holon Institute of Technology\\
Holon, Israel\\
{\tt\small jykaminski@gmail.com}
\And
Michael Werman\\
The Hebrew University of Jerusalem\\
Jerusalem, Israel\\
{\tt\small mchael.werman@mail.huji.ac.il}
}
\begin{document}
\maketitle

{\bf Keywords: Multiple-View Geometry, Deformation, Dynamic Scenes, Structure From Motion}

\begin{abstract}
This paper is a  theoretical study of the  Non-Rigid Structure from Motion problem: what can be computed from a monocular view of a parametrically deforming set of points? 
We treat various variations of this problem for 3D affine and general smooth deformations (under some mild technical restrictions) with either a calibrated or an uncalibrated camera.
We show that in general  at least three images related by quasi-identical  deformations 
are needed in order to have a finite set of solutions of the points' structure.
\end{abstract}

\section{Introduction}

Non-Rigid Structure from Motion (NRSfM) from a monocular camera has been addressed in a number of papers. The setup  is a  camera   tracking a deforming object. 
As the general problem is  unconstrained there have been many papers addressing certain specifications of the general case, for example  using a weak perspective or affine camera, \cite{Akhter_nonrigidstructure,DBLP:journals/pami/GotardoM11,Jingyu Yan and Marc Pollefeys,6247905,conf/eccv/AngstP12} or when the  deformation of the object is restricted to be  a linear combination of $k$ rigid shapes \cite{Hartley08perspectivenonrigid}. There are also a number of papers that constrain the deformation of the object to a physical model or a parameterized family of deformations which they then attempt to solve for  in an optimization framework \cite{6310582}. A review of much of the relevant literature can be found in \cite{Salzmann}.

The general case of deforming configurations of points has also received attention, but with some restrictions. Some authors consider configurations of points moving with constrained motions~\cite{Wolf-2001,Levin-2001}. Other papers  treat general motion but restrict their analysis to the case of a single point~\cite{Avidan-1999,Kaminski-2002,Kaminski-2004,Kaminski-2011}.  There has not been much published on the theoretical underpinnings of the recovery of structure of deforming configurations of points.


In this paper we analyze, for the first time, the complexity and ambiguities of a fixed perspective camera tracking a  parametrically set of  deforming body of points in 3D. When  the points move rigidly this is  the classic Structure from Motion (SfM) ~\cite{Hartley-Zisserman}.

The specific deformations we analyze are affine and more generally smooth deformations under mild restrictions. We show that when the camera is calibrated and the body undergoes  an affine deformation, a matching constraint similar to the classical epipolar geometry can be formulated. We show that from two images one cannot recover neither the deformation nor the original points. When three images (i.e. two deformations) are available, we show that in a generic situation, the remaining ambiguity is still three-dimensional. However, when the two deformations are quasi-identical (see below for a complete definition), there is exactly one solution.

We also show that an invariant shape description can be recovered from 3 images.
The recovery of this invariant does not require camera calibration.  

Then we turn our attention to the case of complete reconstruction (deformation and structure) for general smooth deformations. We show that if the deformation is slow with respect to the time frame and its spatial variations are small with respect to the mutual distances between the points, it can be calculated from a calibrated camera and 3 images, i.e. from the first view and a two other images coming from the same deformation repeated twice, like the affine distortion.

3D projective transformations are not treated as their images are indistinguishable from those of affine transformations;
$$
[I;0]
\begin{pmatrix}A_{3 \times 4} \\ a \,b \,c \,d \end{pmatrix} =
[I;0]\begin{pmatrix}A  \\ 0\, 0\, 0\,1\end{pmatrix}
$$

We are mainly interested in the theoretical possibilities both as to the number of corresponding points and images needed. We present complete algebraic solutions.

\section{Affine Deformations}

\begin{figure}
\includegraphics[width=0.60\textwidth]{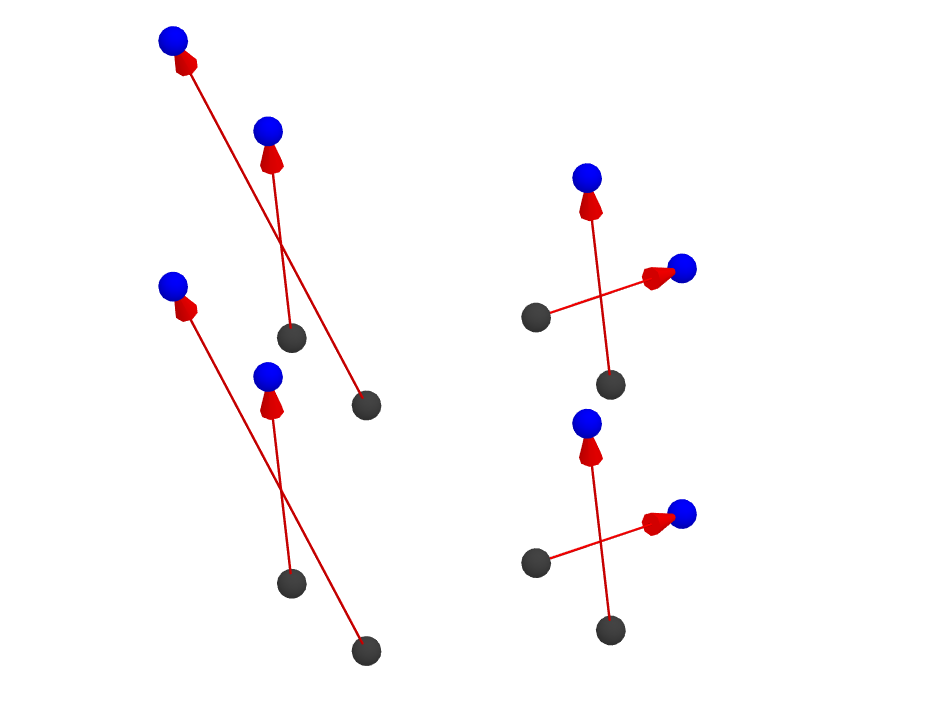}
\centering
 \caption{ The perspective
image of the vertices of a cube, in black, deforming affinely to the blue points, the arrows  are correspondences,  epipolar lines.}
\end{figure}
We start with the study of point correspondences under affine deformations. This has both a practical and a theoretical impact. On the practical side, we shall see that one can write a matching constraint  similar to  classical epipolar geometry. Thus, finding correspondences between two images of an affinely deforming configuration of points can  be done with the same machinery  as  in the classical case of  images of rigidly moving bodies. We consider only invertible affine deformations to avoid degenerate situations where  distinct points collapse to a single point after the deformation. 

On the theoretical front, we shall use this result extensively in the sequel. 

\subsection{Essential Matrix}
\label{sec::essentialMatrix}

Let us consider set of  deforming points  being imaged with a calibrated  camera, which without loss of generality  can be assumed to be $[I;0]$. 
The projection of a 3D point $P$ in homogeneous coordinates into  the first image is $q=[I;0]P$, while  the projection into the second image is  $q'=[I;0]\begin{pmatrix} A & t \\ 0 & 1\end{pmatrix} P$. Eliminating $P$ from these two sets of equations leads to a bilinear constraint over the  corresponding image points $q,q'$, the so-called {\em essential matrix $E$} where, $q'^t E q = 0$.  
 
\begin{lemma}
When $t \neq 0$, the essential matrix of this pair of images is: $E \equiv [t]_{\times} A$, where $[t]_\times$ is the matrix of the cross product with $t$ in the standard basis of $\R^3$. 
\end{lemma}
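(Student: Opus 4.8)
The plan is to mimic the classical derivation of the essential matrix, the only twist being that the rigid motion between the two views is replaced by the affine deformation $\begin{pmatrix} A & t \\ 0 & 1\end{pmatrix}$. First I would write a world point in homogeneous coordinates as $P = (\mathbf{X}^t, 1)^t$ with $\mathbf{X} \in \R^3$, so that the first projection is $q \sim [I;0]P = \mathbf{X}$ and the second is $q' \sim [I;0]\begin{pmatrix} A & t \\ 0 & 1\end{pmatrix}P = A\mathbf{X} + t$. These two relations hold only up to scale, so I would introduce explicit scalars and write $\mathbf{X} = \mu\, q$ and $q' = \nu\,(A\mathbf{X} + t)$ for some nonzero $\mu,\nu$, which makes the subsequent elimination of $P$ completely transparent.

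Substituting the first relation into the second gives $q' = \nu\mu\, A q + \nu\, t$, which exhibits $q'$ as a linear combination of the two vectors $Aq$ and $t$. The geometric content of eliminating $P$ is therefore precisely the statement that the three vectors $q'$, $Aq$ and $t$ of $\R^3$ are linearly dependent, i.e. coplanar. I would encode this as the vanishing of their determinant, $\det[\,q' \mid Aq \mid t\,] = 0$, which is the natural bilinear relation in $q$ and $q'$.

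To reach the advertised form I would expand this determinant as a scalar triple product, $\det[\,q' \mid Aq \mid t\,] = q'^t\,(Aq \times t)$, and then rewrite the cross product with the skew-symmetric matrix $[t]_\times$ using $Aq \times t = -\,t \times (Aq) = -[t]_\times A\, q$. This yields $q'^t\,[t]_\times A\, q = 0$, so that $E \equiv [t]_\times A$ is the essential matrix, as claimed.

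I expect no serious obstacle, as the derivation is elementary. The only points requiring care are the bookkeeping of the homogeneous scale factors $\mu,\nu$ (to confirm the elimination is valid for every nonzero choice and that $E$ is well defined only up to scale) and the sign and ordering conventions in the identity $[t]_\times v = t \times v$, which fix the correct placement of $A$ and the sign in $E$. Invertibility of $A$, already assumed in the setup, guarantees that $Aq$ is a genuine nonzero vector, so the coplanarity condition is nondegenerate.
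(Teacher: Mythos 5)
Your proposal is correct and follows essentially the same route as the paper: both eliminate the point by writing $q' \equiv \la Aq + t$ and then derive the bilinear constraint $q'^t [t]_\times A q = 0$, the paper by applying $[t]_\times$ to both sides and invoking skew-symmetry, you by the equivalent observation that $q'$, $Aq$, $t$ are coplanar so their scalar triple product vanishes. The extra care you take with the scale factors $\mu,\nu$ and the sign conventions is sound but does not change the argument in any substantive way.
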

\begin{proof}
Consider a point $P$ in $\PP^3$, not at infinity, projected to  $q$ in the first image. Then $q \equiv [I;0]P$, thus $P = [\la q, 1]^t$, for some $\la \in \R$.  $[A;t]P$ is projected into the second image as $q' \equiv  [A;t]P$. Thus $q' \equiv (\la A q + t)$. Then $[t]_{\times} q' \equiv [t]_{\times} Aq$ as $[t]_{\times} t=0$. This yields $q'^t [t]_{\times} A q=0$ giving $E \equiv [t]_{\times} A$.
\end{proof}

If $t = 0$, $E$ cannot be computed because in that case, for any non-zero vector $u \in \R^3$, we have:  $q'^t [u]_{\times} A q = 0$.  

We denote by $\equiv$ equality modulo multiplication by a non-zero scalar. 

If there are matching pairs of points between the two images: $(q_i,q_i')_{i=1, \cdots,n}$, the following equations hold: \begin{equation} \label{eq::matching_pts} q_i'^t E q_i = 0 \end{equation} for each $i$ and rank $E$ is 2 as rank $[t]_{\times} $ is 2. Indeed the deformation is assumed to be invertible, i.e. $\det(A) \neq 0$. $E$ can thus  be computed from 7 pairs of corresponding points in general position or linearly from at least 8 pairs of points.

However, since $E$ has rank $2$ and is defined modulo multiplication by a non-zero scalar, the knowledge of it can only provide up to $7$ over the $12$ parameters that define an affine transformation. This implies that several images are necessary to compute the deformation. A more precise analysis is presented staring from the next section. 

\subsection{Deformation Recovery}
\label{sec::DefFromEssMat}

Once the essential matrix $E \equiv [t]_{\times}A$ is computed,  deformation can be recovered up to a 4-parameter ambiguity. To show this, we recall a lemma from \cite{Hartley-Zisserman} (page 255): If a rank $2$ matrix $F$ can be decomposed in two different ways as $F = [t]_{\times} A = [\tilde{t}]_{\times} \tilde{A}$ then there exists a constant
 $\la \neq 0$ and $v \in \R^3$, such that: $\tilde{t} = \la t$ and $\la \tilde{A} = A + tv^t$. Notice that since the matrix $E$ is defined modulo $\R^*$ (multiplication by a non-zero scalar), there are 5 and not 4 degrees of freedom related to the extracted of the deformation from $E$. 
 

Can more than two images help? Let us consider the situation where the deformation between the first and the second image is $(A,a)$ and  between the second and the third   $(B,b)$. We  now have three distinct essential matrices: $E_{12} \equiv [a]_{\times} A$, $E_{23} \equiv [b]_{\times}B$ and $E_{13} \equiv [Ba+b]_{\times}BA$. 

From $E_{12}$, we can compute $a_0$ and $A_0$ such that $\exists \alpha \neq 0, a = \alpha a_0$ and $\exists v_1, A = \frac{1}{\alpha}(A_0 + a_0 v_1^t)$. From $E_{23}$, we can compute $b_0$ and $B_0$ such that $\exists \beta \neq 0, b = \beta b_0$ and $\exists v_2, B = \frac{1}{\beta}(B_0 + b_0 v_2^t)$. From the third essential matrix $E_{13}$ we can compute $c_0$ and $C_0$ such that:
\begin{equation}
\label{eq::gamma}
  \exists \gamma \neq 0, Ba+b = \gamma c_0
\end{equation} 
and
\begin{equation}
\label{eq::v_3}
  \exists v_3, BA = \frac{1}{\gamma}(C_0 + c_0v_3^t)  
\end{equation}

From equations~(\ref{eq::gamma}) and~(\ref{eq::v_3}), we get the following system: 
\begin{equation}
\label{eq::affine_system}
\left \{ \begin{array}{lcc}
\alpha (B_0 + b_0 v_2^t) a_0 + \beta^2 b_0 - \beta \gamma c_0 & = & 0\\
\gamma (B_0 + b_0 v_2^t) (A_0 + a_0 v_1^t) - \alpha \beta (C_0 +c_0v_3^t) & = & 0
\end{array} \right. 
\end{equation}

Furthermore one has to enforce the  constraint that none of $\alpha,\beta$ nor $\gamma$ vanishes. Formally, this is equivalent to computing  in the localization of the polynomial ring with respect to these variables~\cite{GreuelPfister-2002}. Concretely, one must introduce new variables: $x,y,z$ and the equations: $\alpha x - 1 = \beta y - 1 = \gamma z -1 = 0$.

Eventually the number of unknowns is $N = 9+3+3 = 15$ and we have exactly $15$ equations. They define an real algebraic variety $X$ of $\R^{15}$. By~\cite{Mather-2012}, it is  known that real algebraic varieties are  stratified manifolds. Roughly speaking a stratified manifold has a dense open set which is a smooth manifold and whose complement is a stratified manifold of strictly smaller dimension. Further technical conditions are needed to fully define a stratified manifold. These conditions are satisfied in the case of real algebraic varieties. See~\cite{Mather-2012} for more details.  

Our concern now is to determine the dimension of $X$. $X$ is a finite set only if the dimension is zero. In this case,   its degree is useful in order to estimate the number of solutions. We prove here that $X$ has strictly positive dimension.


\begin{theorem}
\label{theo::3IsBad}
  For two unrelated non-singular deformations, such that $Ba$ and $b$ are linearly independent, $X$ is a three-dimensional manifold diffeomorphic to $\{-1,1\} \times \R^3$.    
\end{theorem}
\begin{proof}
Let $\alpha_0, \beta_0, \gamma_0, x_0, y_0, z_0, v_{10}, v_{20}, v_{30}$ be a point on $X$. $X$ cannot be empty since at least the actual deformations must satisfy the equations defining $X$. Let $a = \alpha_0 a_0$, $A = 1/\alpha_0(A_0 + a_0 v_{10}^t)$, $b = \beta_0 a_0$, $B = 1/\beta_0(B_0 + b_0 v_{20}^t)$. Then we know that $Ba+b = \gamma_0 c_0$ and $BA = 1/\gamma_0(C_0 + c_0 v_{30}^t)$. Consider the variety $Y$ defined by the following system:
$$
\left \{ \begin{array}{lcc}
\alpha' (B + b v_2'^t) a + \beta'^2 b - \beta' \gamma' (Ba+b) & = & 0\\
\gamma' (B + b v_2'^t) (A + a v_1'^t) - \alpha' \beta' (BA + (Ba+b)v_3'^t) & = & 0 \\
\alpha' x' - 1 & = & 0 \\
\beta' y' - 1 & = & 0 \\
\gamma' z' - 1 & = & 0
\end{array} \right. 
$$ 

The varieties $X$ and $Y$ are easily seen to be isomorphic by the linear mapping: 
$$
\begin{array}{l}
(\alpha,\beta,\gamma,x,y,z,v_1^t,v_2^t,v_3^t) \mapsto \\ 
\hspace{1cm} \begin{array}{l} 
(\alpha',\beta',\gamma',x',y',z',v_1'^t,v_2'^t,v_3'^t) = \\
(\alpha/\alpha_0 , \beta/\beta_0 , \gamma/\gamma_0, \alpha_0 x, \beta_0 y, \gamma_0 z, 1/\alpha_0^2 (v_1^t - v_{10}^t) , 1/\beta_0^2 (v_2^t - v_{20}^t) , 1/\gamma_0^2 (v_3^t - v_{30}^t)) \end{array} 
\end{array}
$$ 

For the sake of clarity and simplicity, we shall drop the prime in all variables. For instance, we shall continue to write $\alpha$ while we intend $\alpha'$ and similarly for all variables. 

Assume that $(A,a,B,b)$ is given and satisfies the assumptions of the theorem. The first equation yields $(\alpha - \beta \gamma)Ba + (\alpha(v_2^ta) + \beta^2 - \beta \gamma)b = 0$. Since the vectors $Ba,b$ are linearly independent, we have: 
\begin{equation}
\label{eq::abc}
  \alpha - \beta \gamma = 0
\end{equation} and 
\begin{equation}
\label{eq::v2}
  \alpha(v_2^ta) + \beta^2 - \beta \gamma=0  
\end{equation}

The second equation yields $(\gamma - \alpha \beta)BA + Ba(\gamma v_1^t - \alpha \beta v_3^t) + b(\gamma v_2^tA + \gamma(v_2^ta)v_1^t - \alpha \beta v_3^t) = 0$. Since $\rank(BA) = 3$ and $\rank(Ba(\gamma v_1^t - \alpha \beta v_3^t) + b(\gamma v_2^tA + \gamma(v_2^ta)v_1^t - \alpha \beta v_3^t)) \leq 2$, we have 
\begin{equation}
\label{eq::abc2}
\gamma-\alpha \beta = 0  
\end{equation} and
 
\begin{equation}
\label{eq::v3}
Ba(\gamma v_1^t - \alpha \beta v_3^t) + b(\gamma v_2^tA + \gamma(v_2^ta)v_1^t - \alpha \beta v_3^t) = 0
\end{equation}

Relying on equations~\ref{eq::abc} and~\ref{eq::abc2}, we get that $\beta \in \{-1,1\}$. 

If $\beta = 1$, we get $\alpha = \gamma$ and so by equation~\ref{eq::v3}, we have $Ba(\alpha v_1^t - \alpha v_3^t) + b(\alpha v_2^tA + \alpha(v_2^ta)v_1^t - \alpha v_3^t) = 0$. Again, since $Ba$ and $b$ are linearly independent, we get $v_1 = v_3$ and $v_2^t A + (v_2^ta) v_1^t - v_1^t=0$. Equation~\ref{eq::v2} yields $v_2^ta = \frac{\alpha-1}{\alpha}$. Hence $v_2 = \frac{1}{\alpha} A^{-t}v_1$. Then by $v_2^ta = \frac{\alpha-1}{\alpha}$, we finally get $\alpha = 1 +v_1^t A^{-1} a$. Proving that given $v_1$, one can compute linearly $v_2,v_3,\alpha,\gamma$. Therefore the connected component of $Y$ on which $\beta = 1$ is indeed a manifold diffeomorphic to $\R^3$. 

Now if $\beta = -1$, the same technique yields a similar conclusion. Indeed here $\gamma = -\alpha$, which leads to $Ba(-\alpha v_1^t - \alpha v_3^t) + b(-\alpha v_2^tA - \alpha(v_2^ta)v_1^t - \alpha v_3^t) = 0$. The linear independence of $Ba$ and $b$ provides us with the following constraints: $v_3 = -v_1$ and $v_2^t A + (v_2^ta) v_1^t - v_1^t=0$. The latter one is identical to the constraint in the  $\beta = 1$ case. Therefore we have a similar conclusion and the connected component of $Y$ for which $\beta = -1$ is a manifold diffeomorphic to $\R^3$ too.

From these two cases, the conclusion of the theorem follows. 
\end{proof}

In relation to the discussion above, one can check that in the neighborhood of each real solution, there are infinitely other real solutions. For example, provided that $(A,a),(B,b)$ is a solution,  $(\la A,a),(B,b)$ for $\la \in \R \backslash \{0\}$ is also a solution, since the essential matrices remains unchanged up to a scale.

The practical consequence of this theorem is that one cannot hope to recover deformations from three images in the general case.

When the same deformation is repeated twice the system of equations is simplified. 

Before we proceed more in depth, let us make the following observation. The deformations $(A,a)$ and $(\lambda A, \lambda a)$ for $\lambda \neq 0$ produce the same image. Therefore one could conclude  that whatever the number of images, one can only expect  to recover the deformation modulo this equivalence. However observe that if multiples $(\lambda A, \lambda a)$ and $(\mu A, \mu a)$ of the same deformation are applied consecutively we get the following overall deformation $(\mu \lambda A^2, \mu \lambda Aa + \mu a)$ which is equivalent to $(A^2, Aa + a)$ only if $\mu \lambda = \mu$ or equivalently $\lambda = 1$. Therefore if exactly the same deformation is repeated twice, one can hope to be able to fully recover it. This is the conclusion the  analysis below exhibits.

Consider the essential matrices $E_{12}$ and $E_{13}$. $E_{23}$ is the same  as $E_{12}$, because the same deformation is repeated twice. We compute $A_0,a_0,C_0,c_0$ as previously and for the actual deformation $A,a$ there exist $\alpha \neq 0$, $v_1 \in \R^3$, $\gamma \neq 0$ and $v_3 \in \R^3$, such that:
$$
\left \{ \begin{array}{l}
a = \alpha a_0 \\
A = 1/\alpha (A_0 + a_0 v_1^t) \\ 
Aa + a = \gamma c_0 \\
A^2 = 1/\gamma (C_0 + c_0 v_3^t)
\end{array} \right .
$$ 

This results in the following system of equations:
\begin{equation}
\label{eq::2IsBetter}
\left \{ \begin{array}{lcc}
(A_0 + a_0 v_1^t) a_0 + \alpha a_0 - \gamma c_0 & = & 0\\
\gamma (A_0 + a_0 v_1^t)^2  - \alpha^2 (C_0 + c_0 v_3^t) & = & 0 \\
\alpha x - 1 & = & 0 \\
\gamma z - 1 & = & 0
\end{array} \right .
\end{equation}

\begin{theorem}
\label{theo::2IsBetter}
For a generic affine deformation, such that the three following conditions hold (i) $\rank(A) = 3$, (ii) $1$ is not an eigenvalue of $A$ and (iii) $Aa,a$ are linearly independent, repeated twice, one can recover this deformation from the three images.   
\end{theorem}
\begin{proof}
 Here $X$ designates the sub-variety of $\R^{10}$ defined by the system~\eqref{eq::2IsBetter}. 
 Let $\alpha_0, \beta_0, x_0, z_0, v_{10}, v_{30}$ be a point on $X$. Let $a = \alpha_0 a_0$, $A = 1/\alpha_0(A_0 + a_0 v_{10}^t)$. Then we know that $Aa+a = \gamma_0 c_0$ and $A^2 = 1/\gamma_0(C_0 + c_0 v_{30}^t)$. Consider the variety $Y$ defined by the following system:
\begin{equation}
\label{eq::2IsBetter_modified}
\left \{ \begin{array}{lcc}
(A + a v_1'^t) a  + \alpha' a - \gamma' (Aa+a) & = & 0\\
\gamma' (A + a v_1'^t)^2 - \alpha'^2(A^2 + (Aa+a)v_3'^t) & = & 0 \\
\alpha' x' - 1 & = & 0 \\
\gamma' z' - 1 & = & 0
\end{array} \right . 
\end{equation}

The varieties $X$ and $Y$ are easily seen to be isomorphic. Indeed the following linear mapping: $(\alpha,\gamma,x,y,z,v_1^t,v_3^t) \mapsto (\alpha',\gamma',x',z',v_1'^t,v_3'^t) = (\alpha/\alpha_0 , \gamma/\gamma_0, \alpha_0 x, \beta_0 y, \gamma_0 z, 1/\alpha_0^2 (v_1^t - v_{10}^t) , 1/\gamma_0^2 (v_3^t - v_{30}^t))$ is an isomorphism from $X$ and $Y$. Therefore $\dim(X) = \dim(Y)$. 

As before, we shall drop the prime from all variables in order to ease the expressions.

The first equation yields 
\begin{equation}
(1-\gamma)Aa + ((v_1^ta) + \alpha - \gamma)a = 0.   
\end{equation}
Since $Aa$ and $a$ are linearly independent, we get $\gamma=1$ and $v_1^t a = 1-\alpha$. The second equation yields $(1-\alpha^2)A^2 + Aa(v_1^t - \alpha^2 v_3^t) + a(v_1^t A + (v_1^ta) v_1^t - \alpha^2 v_3^t) = 0$. Since $\rank(A^2) = 3$ and $\rank(Aa(v_1^t - \alpha^2 v_3^t) + a(v_1^t A + (v_1^ta) v_1^t - \alpha^2 v_3^t)) \leq 2$, this yields $1-\alpha^2 = 0$ and $v_1^t - \alpha^2 v_3^t = v_1^t A + (v_1^ta) v_1^t - \alpha^2 v_3^t = 0$ (because $Aa$ and $a$ are linearly independent). Hence $v_1 = \alpha^2 v_3$ and $v_1^t A + (v_1^ta) v_1^t - v_1^t= 0$. Since $v_1^t a = 1-\alpha$, we get $A^t v_1 = \alpha v_1$. Since $1 \not \in spec(A^t) = spec(A)$, $\alpha \neq 1$ and then $\alpha = -1$. Then $v_1$ is an eigenvector of $A^t$ with respect to $-1$. Together with $v_1^t a = 1-\alpha = 2$, one can compute $v_1$ and then there is a unique solution to the system, since the other variables can be computed from $\alpha$ and $v_1$. 
\end{proof}
 
It is clear that the unique solution is real, since this is  the actual deformation that the points have undergone. 

There are cases, other than two identical transformations, where the deformations are also solvable. 

For example, when $B = \la A$ and $b = \mu a$ for unknown, non-zero scalars  $\la, \mu$. The  system of equations~\ref{eq::affine_system}  reduces to:

\begin{equation}
\label{eq::quasi_identical_1}
\left \{ \begin{array}{lcc}
\la (A_0 + a_0 v_1^t) a_0 + \alpha \mu  a_0 - \gamma c_0 & = & 0\\
\la \gamma (A_0 + a_0 v_1^t)^2 - \alpha^2 (C_0 +c_0v_3^t) & = & 0
\end{array} \right. 
\end{equation}

This system is  similar, but still different, than system~\eqref{eq::2IsBetter}. Of course, as previously, one has to add the two further equations $\alpha x - 1 = \gamma z - 1 = 0$. Now we shall prove the following result.

System~\eqref{eq::quasi_identical_1} defines a discrete variety. As a consequence, 
\begin{theorem}
\label{thm::2images_is_better++}
 If $(A,a)$ is the first deformation and $(\la A, \mu a)$ 
 the second deformation ($\la \neq 0$ and $\mu \neq 0$), one can recover the two deformations and the structure provided that $Aa,a$ are linearly independent and $\frac{\mu}{\lambda} \not \in spec(A)$ is known. 
\end{theorem}
\begin{proof}
 We proceed as in the the previous theorem. Here $X$ designates the sub-variety of $\R^{10}$ defined by  system~\eqref{eq::quasi_identical_1} (together with equations $\alpha x - 1 = \gamma z - 1 = 0 $). Note that $\lambda , \mu$ are not unknowns but parameters. 
 Let $\alpha_0, \gamma_0, x_0, z_0, v_{10}, v_{30}$ be a point on $X$. Let $a = \alpha_0 a_0$, $A = 1/\alpha_0(A_0 + a_0 v_{10}^t)$. Then we know that $\la Aa + \mu a = \gamma_0 c_0$ and $\la A^2 = 1/\gamma_0(C_0 + c_0 v_{30}^t)$. Consider the variety $Y$ defined by the following system:
\begin{equation}
\label{eq::quasi_identical_2}
\left \{ \begin{array}{lcc}
\la (A + a v_1'^t) a  + \alpha' \mu a - \gamma' (\la Aa+ \mu a) & = & 0\\
\la \gamma' (A + a v_1'^t)^2 - \alpha'^2(\la A^2 + (\la Aa + \mu a)v_3'^t) & = & 0 \\
\alpha' x' - 1 & = & 0 \\
\gamma' z' - 1 & = & 0 \\
\end{array} \right . 
\end{equation}

Again the two varieties $X$ and $Y$ are easily seen to be isomorphic. And as before, we shall drop the prime from all variables.

Form the second equation, we get $\lambda(1-\gamma)Aa + (\lambda v_1^t a + \alpha \mu - \gamma \mu) a = 0$. Therefore $\gamma = 1$ and $v_1^t a = \frac{\mu}{\lambda}(1-\alpha)$. 

From the second equation we get: $\lambda(1-\alpha^2)A^2 + \lambda Aav_1^t + \lambda a v_1^t A + \lambda (v_1^t a) a v_1^t - \alpha^2 \lambda Aa v_3^t - \alpha^2 \mu a v_3^t = 0$. 

Relying on a rank argument, as above, we get $1-\alpha^2 = 0$ and $\lambda Aa (v_1^t - \alpha^2 v_3^t) + a (\lambda v_1^t A + \lambda (v_1^t a) v_1^t - \alpha^2 \mu v_3^t) = 0$.  

The linear independence of $Aa$ and $a$ again implies that $v_1^t - \alpha^2 v_3^t = 0$ and $\lambda v_1^t A + \lambda (v_1^t a) v_1^t - \alpha^2 \mu v_3^t = 0$. This yields $v_3 = \frac{1}{\alpha^2}v_1$ and $A^t v_1 = \frac{\mu \alpha}{\lambda} v_1$. If $\alpha = 1$, then $v_1$ would be an eigenvector of $A^t$ with respect to $\frac{\mu}{\lambda}$, which contradicts the assumption. Then $\alpha = -1$ and then one can compute $v_1$ relying on $A^t v_1 = \frac{\mu \alpha}{\lambda} v_1$ and $v_1^t a = \frac{\mu}{\lambda}(1-\alpha)$. From this, one gets $v_3$. Thus there is a unique solution.  
\end{proof}

On the practical side, since the ratio $\frac{\mu}{\lambda}$ must be known for the computation to be carried out, one can assume that $\mu = \lambda$. This situation will formalized in definition~\ref{def::homothety_equivalence} below. 

\subsection{Beyond Essential Matrices}
\label{sec::recons}

Consider now the computation of both the deformation and the structure without computing the essential matrix. Let $P_1, \cdots, P_n$ be $n$ points in $\R^3$ that undergo an affine deformation $\left ( \begin{matrix} A & t \\ 0 & 1 \end{matrix} \right )$. Before the deformation the image points are $q_i = (u_i,v_i,1)^t$ and after the deformation are denoted $q_i' = (u_i',v_i',1)^t$. The camera matrix is still $[I,0]$

With these notations, there exists for each $i$, $\lambda_i \in \R \backslash \{0\}$, such that $P_i = \lambda _i q_i$. Hence we have the following set of equations:

\begin{equation}
\label{eq::full_eq}
q_i'  \equiv  \lambda_i A q_i + t 
\end{equation}

First notice that this equation is an equality in the projective plane. From a set of such equation, one cannot expect to fully compute the deformation. 

However we shall show that one can compute the deformation modulo an overall scale and fully recover the structure, provided $t \neq 0$ and $4$ points are known in $\R^3$.  

If $t = 0$, equality \eqref{eq::full_eq} reads $q_i' \equiv A q_i$ and one cannot recover the  structure (i.e. $\lambda_i$) at all. Therefore, we assume in the sequel that $t \neq 0$. 

\begin{definition}
\label{def::homothety_equivalence}
Two affine deformations $(A,a)$ and $(B,b)$ are said to be homothety equivalent if there exists a non-zero real $\lambda$ such that $B = \lambda A$ and $b = \lambda a$. 
\end{definition}

\begin{lemma}
Assume $n$ image correspondences $q_i \leftrightarrow  q_i'$, before and after deformation, are given.
Assume that the initial structure is known, that is $\lambda_1, \cdots, \lambda_n$ are known. Then the set of deformations that can be computed in this setting is a one-dimensional linear space, provided that $n \geq 4$ and the points $\{P_i = \lambda_i q_i\}_{1 \leq i \leq n}$ are in a generic position. 
\end{lemma}
\begin{proof}
Equation~\eqref{eq::full_eq} is says that $AP_i + t$ lies in the ray defined by the camera center and the image point $q_i'$. 

Let $\phi_1(P) = AP+a$ and $\phi_2(P) = BP +b$ be two invertible affine deformations that are compatible with equations~\eqref{eq::full_eq} for $i=1, \cdots, n$. Then there is an affine transformation $h$ that maps $\phi_1(P_i)$ to $\phi_2(P_i)$ for each $i$, say $h = \phi_2 \circ \phi_1^{-1}$. More precisely $h(Q) = BA^{-1}Q - BA^{-1}a + b$. Let $Q_i = \phi_1(P_i)$. For each $i$, the origin, $Q_i$ and $h(Q_i)$ are aligned. Then provided that $n \geq 4$ and points are in a generic configuration, $h$ is an homothety, that is $b = BA^{-1}a$ and $BA^{-1} = \sigma I$ for some $\sigma \neq 0$. Indeed let $v_i$ be the vector $\overrightarrow{OQ_i}$, such that $v_1,v_2,v_3$ form a basis of $\R^3$. Then we have $h(v_i) = \sigma_i v_i$ for $1 \leq i \leq 3$ and some non-zero scalars $\sigma_1,\sigma_2,\sigma_3$. Now let $v_4 = \overrightarrow{OQ_4}$, so that $v_4$ is a linear combination of $v_1,v_2,v_3$: $v_4 = \alpha_1 v_1 + \alpha_2 v_2 + \alpha_3 v_3$. Then $h(v_4) = \sigma_4 v_4 = \alpha_1 \sigma_1 v_1 + \alpha_2 \sigma_2 v_2 + \alpha_3 \sigma_3 v_3$, so that $\sigma_1 = \sigma_2 = \sigma_3 = \sigma_4$ and $h$ is an homothety as expected. 

Hence $B = \sigma A$ and $b = \sigma a$. Therefore under the assumptions of the lemma, two deformations that are compatible with the equations~\eqref{eq::full_eq} are homothety equivalent.                                              

Now consider the three first points that define a non-degenerate triangle. For the sake of simplicity, denote them  $P_1, P_2, P_3$. The plane defined by these point, say $H$, intersects the rays defined by $q_1',q_2',q_3'$ in three points $Q_1,Q_2,Q_3$. The correspondences $P_i \mapsto Q_i$ provide $9$ linear independent constraints on the deformation $(A,a)$. 

Now consider a fourth point $P_4$ not lying on $H$. Saying that $AP_4+a$ lies in the rays defined by $q_4'$ adds two linear constraint independent of the previous ones. 

We ends up with 11 linear independent constraints. This allows computing an element in the equivalent class of the actual deformation, say $(A_0,a_0)$. As mentioned previously the two deformations $(A,a)$ and $(A_0,a_0)$ are homothety equivalent.  

The group of homotheties centered at the origin is a one-dimensional linear space, which yields the conclusion. 
\end{proof}

Now equation~\eqref{eq::full_eq} implies 

\begin{equation}
\label{eq::eqs_on_lambda_i}
    \left\{ \begin{array}{rcl}
         \lambda_i a_1^t q_i + t_1 - u_i' (\lambda_i a_3^t q_i + t_3) & = & 0  \\
         \lambda_i a_2^t q_i + t_2 - v_i' (\lambda_i a_3^t q_i + t_3) & = & 0 
    \end{array}\right.,
\end{equation}
 where we  denote by $a_1^t,a_2^t,a_3^t$ the line of $A$ and $t_1,t_2,t_3$ the coordinates of $t$. Since the points are assumed to be in $\R^3$ and therefore do not lie at infinity, the coefficients $(\lambda_i a_3^t q_i + t_3)$ do not vanish, since affine transformations do not send points to infinity. Therefore the implication is actually an equivalence.

For each $i$, let us consider the following function $f_i: \R^{12} \times \R \longrightarrow \R^2$ that maps $(a_{11}, \cdots, a_{33}, t_1, t_2, t_3, \lambda_i)$ to $(\lambda_i a_1^t q_i + t_1 - u_i' (\lambda_i a_3^t q_i + t_3), \lambda_i a_2^t q_i + t_2 - v_i' (\lambda_i a_3^t q_i + t_3))$. The Jacobian matrix of $f_i$ is:

\begin{equation}
\label{eq::jac_matrix_i}    
\left [\begin{array}{ccccccccccccc}
\lambda_i u_i & \lambda_i v_i & \lambda_i & 0 & 0 & 0 & - \lambda_i u_i u_i' & - \lambda_i v_i u_i' & - \lambda_i u_i'& 1 & 0 & -u_i' & a_1^tq_i - u_i' a_3^t q_i \\
 0 & 0 & 0 & \lambda_i u_i & \lambda_i v_i & \lambda_i & - \lambda_i u_i v_i' & - \lambda_i v_i v_i' & - \lambda_i v_i' & 0 & 1 & -v_i' & a_2^tq_i - v_i' a_3^t q_i 
\end{array} \right ]
\end{equation}

This matrix has always rank $2$ and $f_i$ is therefore a submersion from $\R^{12} \times \R$ to $\R^2$. The level set over $0$ is not empty since the actual deformation and structure define a point in it. Therefore the $f_i^{-1}(0)$ is actually a smooth manifold of dimension $13-2=11$. 

Now let us consider for each $k$ in $\{1, \cdots, n\}$, the injection $i_k^n: \R^{12} \times \R \longrightarrow \R^{12} \times \R^n$, such that $i_k^n(a_{11}, \cdots, a_{33},t_1,t_2,t_3,\lambda_k) = (a_{11}, \cdots, a_{33},t_1,t_2,t_3,0, \cdots, 0, \lambda_k, 0, \cdots, 0)$, where $\lambda_k$ is sent to the position $k$ in the second factor in the product $\R^{12} \times \R^n$.  Let $\pi_k^n$ be the left inverse of $i_k^n$, that is the projection from $\R^{12} \times \R^n$ to $\R^{12} \times \R$, where the $k-$copy of $\R$ is the only factor that is kept. 

\begin{prop}
For $n \in \{1,\cdots,7\}$, the set $\cap_{i=1}^n (f_i \circ \pi_i^n)^{-1}(0)$ is a smooth manifold of dimension $12-n$. 
\end{prop}
\begin{proof}
The proposition holds for $n=1$ according to the above analysis. Assume that it is true for some $n \leq 6$, let us prove it for $n+1$. 
By the induction assumption we have $\dim(\cap_{i=1}^n (f_i \circ \pi_i^n)^{-1}(0)) = 12-n$. 

Since we now add a point, a direction is added and now we shall look at $M = \cap_{i=1}^{n} (f_i \circ \pi_i^{n+1})^{-1}(0)$ in place of $\cap_{i=1}^{n} (f_i \circ \pi_i^n)^{-1}(0))$. Therefore $\dim(M) = 12-n+1$. 

Now let $N$ be $(f_{n+1} \circ \pi_{n+1}^{n+1})^{-1} (0)$. By the above analysis, $N$ is a smooth manifold of dimension $12+n+1 - 2 = 12+n-1$. The two manifolds $M$ and $N$ are transverse, since for each $x \in M \cap N$, we have $T_x M + T_x N = T_x (\R^{12} \times \R^{n+1})$. Indeed, if $\{e_i\}_{1 \leq i \leq 12+n+1}$ denotes the standard basis of $\R^{12+n+1}$ then for $x = (a_{11}, \cdots, a_{33}, t_1, t_2, t_3, \lambda_1, \cdots, \lambda_{n+1})$, the vectors $z_1 = e_{12+n+1}$ and $z_2 = -\prod_{i=1}^n (a_1^t q_{i} - u'_{i}a_3^t q_{i}) e_{10} + \sum_{k=1}^n \prod_{i \neq k} (a_1^t q_{i} - u'_{i}a_3^t q_{i}) e_{12+k}$ are linearly independent, lie in $T_x M$, but not in $T_x N$. 

Therefore $T_x N + \R z_1 + \R z_2 = T_x (\R^{12} \times \R^{n+1}) \subset T_x N + T_x M$. 

Then $M \cap N = \cap_{i=1}^{n+1} (f_i \circ \pi_i^{n+1})^{-1}(0)$ is a manifold, which dimension is $12-n+1 + 12+n-1 - (12 + n + 1) = 12 - n -1$ as expected. This completes the induction.  
\end{proof}

Adding further points does not decrease the dimension. More precisely, we now prove the following result.

\begin{theorem}
\label{thm::essential_matrix_is_all}
Given $n \geq 7$ correspondences $q_i \leftrightarrow  q_i'$ of points that are in a generic configuration (see below section~\ref{sec::critical_surface} for more details), the set $\cap_{i=1}^n (f_i \circ \pi_i^n)^{-1}(0)$ is a smooth submanifold of $\R^{12+n}$ of dimension $5$.  
\end{theorem}
\begin{proof}
For the sake of simplicity, let us denote $\cap_{i=1}^n (f_i \circ \pi_i^n)^{-1}(0)$ by M. Consider the canonical projection $\pi: \R^{12+n} \rightarrow \R^{12}$ on the 12 first coordinates. The image of M by $\pi$ is obtained by eliminating $\lambda_i$ for each $i$ from the two equations each correspondence provides. This accounts to say that $\pi(M)$ is defined by the following equations:
$$
(a_1^t q_i - u_i' a_3^t q_i)(v_i' t_3 - t_2) = (a_2^t q_i - v_i' a_3^t q_i)(u_i' t_3 - t_1),
$$
for $i$ in $\{1, \cdots, n\}$. After simplification, this yields:
$$
v_i' t_3 a_1^t q_i - t_2 a_1^t q_i + t_2 u_i' a_3^t q_i = u_i' t_3 a_2^t q_i - t_1 a_2^t q_i + t_1 v_i' a_3^t q_i, 
$$
which is a bi-linear relation on $q_i$ and $q_i'$. This relation can also be written in a matrix form:
$$
q_i'^t [t]_{\times} A q_i = 0.
$$
So with no surprise, we roll back to the essential matrix, which was obtained by eliminating the 3d points, while here we  eliminated $\lambda_i$, which is equivalent. From the beginning of the section~\ref{sec::essentialMatrix} it appears clearly that, provided the points are in a generic configuration which allows the computation of $E$, the set $\pi(M)$ is indeed a 5 dimensional smooth manifold parametrized by $(\R^*)^2 \times \R^3$ and embedded into $\R^{12}$. 

Now let us consider the fiber of $\pi$ over a point $z = (a_1^1,a_2^t,a_3^t,t_1,t_2,t_3) \in \pi(M)$. For each $i$, one has two consistent equations on $\lambda_i$ (Equations~\eqref{eq::eqs_on_lambda_i}). Therefore, the fiber is a discrete set parametrized by the $n$ values $\lambda_1, \cdots, \lambda_n$. 

The fiber being always a zero-dimensional manifold, diffeomorphic to $\{1, \cdots, n\}$, such that for each point $z \in \pi(M)$, there is an open neighborhood $U_z \subset \pi(M)$ and a bijection $\phi_z: \pi^{-1}(U_z) \rightarrow U_z \times \{1, \cdots, n\}$ that satisfies: (i) $\text{pr}_1 \circ \phi_z = \pi_{\mid \pi^{-1}(U_z)}$ (where $\text{pr}_1: U_z \times \{1, \cdots, n\} \rightarrow U_z$ is the projection on the first factor) and (ii) for each $z' \in U_z$, the fiber over $z'$, $\pi^{-1}(z')$, is diffeomorphic to $\{1, \cdots, n\}$, the set $M$ is in fact a smooth manifold, with dimension  $5 + 0 =5$ (see for example Proposition 1.1.14 in~\cite{Saunders}). The neighborhood $U_z$ is chosen small enough so that there are no two identical fibers and the parametrization of the fiber remains the same for all points of $U_z$.  
\end{proof}

The practical implication of this theorem is that all the information, one can expect to extract is already contained in the essential matrix. 

\subsection{Shape Recovery}

Once the deformation is known the shape before and after deformation is easily calculated. Indeed from the first image, each point is known up to a scalar multiplication (depth). From the second image, this scalar for each point is computed linearly. The complicated part is to compute the deformation and this is our focus. 

\subsection{Critical Surface}
\label{sec::critical_surface}

Are there point configurations that do not allow the recovery of the essential matrix? 
It turns out the situation is  similar to the classical case.

Assume that the projected points before and after deformation do not constrain the essential matrix uniquely. Therefore there exists more than one solution (homogeneous) to the system: $q_i^t E p_i = 0$. One is the correct solution $E_1 = [t]_\times A$, while another solution $E_2$ would have another decomposition. Therefore if there exists another solution, the points must satisfy:
\begin{equation}
\label{eq::critical_quadric}
q_i^t E_2 p_i = P_i^t \left [ \begin{array}{c} A^t \\ t^t \end{array} \right ] E_2 [I;0] P_i = 0.
\end{equation}

Let $M = \left [ \begin{array}{c} A^t \\ t^t \end{array} \right ] E_2 [I;0]$. Then equations~\eqref{eq::critical_quadric} simply means that the points $P_i$ lie on the quadric defined by $\frac{1}{2}(M+M^t)$.

In other words, this means that the original points in space lie on a quadric, whose equation involves the affine motion that we are looking for. In this case, the recovery presents an additional layer of ambiguity. There exist several essential matrices and for each essential matrix, the corresponding affine motion is recovered up to the ambiguity described above.

\subsection{Invariant Shape}
\label{subsec::invariant}

The shape of a deforming object is by definition changing but there are  descriptions that are invariant to the transformations, we shall show when these descriptions can be recovered from a sequence of images of a deforming object. 

\subsubsection{Equations}
Let $P_0,P_1,P_2,P_3,...,P_{n-1}$ be $3d$ points in homogeneous coordinates with $1$ as the last coordinate. Here and during all section~\ref{subsec::invariant}, the points $P_0,P_1,P_2,P_3$ are assumed to define a affine basis of the three dimensional affine space $\R^3$.  

If a point $P$ satisfies $P=\alpha P_0+\beta P_1+\gamma P_2+(1-\alpha-\beta-\gamma) P_3$,  if it undergoes a 3D affine transformation, $T$, then,
$$
TP=\alpha TP_0+\beta TP_1+\gamma TP_2+(1-\alpha-\beta-\gamma) TP_3.
$$
Thus, $(\alpha,\beta, \gamma)$  is an affine invariant and  $\alpha,\beta, \gamma$ and $1-\alpha-\beta-\gamma$ are the affine invariant coordinates of $P$. In this section we aim at computing this affine invariant. The transformation itself is not recovered here. We  deal with the simultaneous recovery of the transformation and the point coordinates in section~\ref{sec::recons}. 

The real advantage of this affine invariant is that it does not require camera calibration, while full recovery of deformation and structure requires it. On the other hand, the affine invariant description only provides structure up to an unknown affine deformation.   

Let us write down the equations for the two image point sets $\{q_i\}$ and $\{q_i'\}$  of an affinely changing point set where $\begin{pmatrix}A & t \\ 0 & 1\end{pmatrix}$ is the affine transformation and $C$  the unknown camera matrix.

\begin{itemize}
\item For the first image, before the deformation, for each $i$, we have: $q_i \equiv C [\alpha_i P_0+\beta_i P_1+\gamma_i P_2+(1-\alpha_i-\beta_i-\gamma_i) P_3]$. Thus  each image point gives two equations.

\item After the deformation, in the second image, for each $i$, we have: $q_i'  \equiv C \begin{pmatrix}A & t \\ 0 & 1 \end{pmatrix} P_i \equiv C \begin{pmatrix}A & t \\ 0 & 1 \end{pmatrix} [\alpha_i P_0+\beta_i P_1+\gamma_i P_2+(1-\alpha_i-\beta_i-\gamma_i) P_3]$. Again this yields two equations per point.
\end{itemize}

Basically the system has $2\times 2n = 4n$ equations, where $n$ is the number of points. As for the unknowns, there are $4\times 3+3(n-4)+ 12 + 12 = 3n+24$ unknowns namely $P_0,P_1,P_2,P_3$, $\{\alpha_i, \beta_i,\gamma_i\}_{i \geq 4}$, the camera matrix $C$ and the affine transformation $\begin{pmatrix}A & t\\0 & 1\end{pmatrix}$. 

There are still ambiguities as, for any full rank $V$  a $4\times 4$ matrix with last row $[0,0,0,1]$,
$CP=(CV)(V^{-1}P)$, (new camera $\times$ new points) and  $C\begin{pmatrix}A & t\\0 & 1\end{pmatrix}P = (CV)(V^{-1}\begin{pmatrix}A t\\0 1\end{pmatrix}V)(V^{-1}P)$, (new camera $\times$ new affine transformation $\times$ new points). Since, in the context of this section where we are interested to compute an invariant of the shape, the only unknowns that are relevant are $\{\alpha_i, \beta_i,\gamma_i\}_{i \geq 4}$, we can assume $C=[I;0]$, removing 12 unknowns. This formally makes the computation identical to the case of a calibrated camera, while calibration is not required here.

To make things more explicit, let us introduce new variables $\la_0,\la_1,\la_2,\la_3$, such that $P_i = \la_i q_i$ for $0 \leq i \leq 3$. Then the equations can be written as follows:
{\small 
\begin{align}
	q_i \wedge [\alpha_i (\la_0 q_0) +\beta_i (\la_1 q_1) + \gamma_i (\la_2 q_2) + \delta_i (\la_3 q_3)] = 0 \label{eq::1stImage} \\
	q_j' \wedge [A;t] \begin{bmatrix}\la_j q_j \\ 1 \end{bmatrix} = 0 \label{eq::2sdImageLambda} \\
	q_i' \wedge [A;t] \begin{bmatrix} \alpha_i (\la_0 q_0) +\beta_i (\la_1 q_1) + \gamma_i (\la_2 q_2) + \delta_i (\la_3 q_3) \\ 1 \end{bmatrix} = 0 \label{eq::2sdImageHybrid}
\end{align}}
where $0 \leq j \leq 3$ and $4 \leq i \leq n-1$ and $\delta_i = 1-\alpha_i-\beta_i-\gamma_i$, $\wedge$ being the cross product. 

These equations define a real algebraic variety in $\R^{12} \times \R^{4} \times \R^{3(n-4)}$ (we  discarded $\delta_i$ in this counting). 

Since none of $\{\la_0,\la_1,\la_2,\la_3\}$ should be zero, we need to compute in the localization of the polynomial ring with respect to each $\la_i$~\cite{GreuelPfister-2002}. Here again, this is  done by adding new variables $\{\mu_0,\mu_1,\mu_2,\mu_3\}$ and the  equations:
\begin{equation}
\la_i \cdot \mu_i - 1 = 0 \label{eq::localization}
\end{equation}

We end up with a real algebraic variety embedded in $\R^{12} \times \R^{4} \times \R^4 \times \R^{3(n-4)}$. Since $\{\la_0,\la_1,\la_2,\la_3,\mu_0,\mu_1,\mu_2,\mu_3\}$ and $\{a_{ij},t_k\}_{1 \leq i,j,k \leq 3}$ are not of interest, we eliminate them from the system and get a system involving only $\{\al_i,\be_i,\ga_i\}_{4 \leq i \leq n-1}$. This is equivalent to projecting $X$ over $\R^{3(n-4)}$. Notice that we are concerned with the case $n \geq 5$.  The question now is: 
does  this define a zero-dimensional variety or in other words can the affine invariant describing each point  be computed up to a finite fold ambiguity? We address this question in the following subsection.



\subsubsection{Dimension Analysis}

\paragraph{The case of two images}

The variety $X$ defined here is isomorphic to the set $\cap_{i=1}^n (f_i \circ \pi_i^n)^{-1}(0)$ that appears in theorem~\ref{thm::essential_matrix_is_all}. Indeed both express the same constraints on the same data in a slightly different parametrization. Therefore $X$ has dimension $5$. Given this fact, can the projection of $X$ on $\R^{3(n-4)}$ be a finite variety? By analysing the fibers of the projection, we shall prove that the image of $X$ by the projection has positive dimension.  

Indeed consider a point in the projection of $X$ on $\R^{3(n-4)}$. Let us denote this point $w = (\alpha_,\beta_i,\gamma_i)_{4 \leq i \leq n-1}$. And let $\pi: \R^{12} \times \R^{4} \times \R^4 \times \R^{3(n-4)} \rightarrow \R^{3(n-4)}$ be the canonical projection. We shall determine the fiber of $\pi$ over $w$ is several steps.

\begin{lemma}
\label{lemma::1stimage_1d}
Provided $n \geq 6$ and the points are in a generic position, the equations~\eqref{eq::1stImage}, where $\lambda_0, \lambda_1, \lambda_2, \lambda_3$ are considered as unknowns, define a one-dimensional linear space. 
\end{lemma}
\begin{proof}
The geometric signification of these equations is that the point $\alpha_i (\lambda_0 q_0) + \beta_i (\lambda_1 q_1) + \gamma_i (\lambda_2 q_2) + \delta_i (\lambda_3 q_3)$ lie in the ray defined by the camera center and the pixel $q_i$. This yields two independent homogeneous linear conditions on $\lambda_0, \lambda_1, \lambda_2, \lambda_3$ for each $i$. If we stack all together these equations, we get an homogeneous linear system which rank has to be less than $4$, unless there is no non-trivial solution, which is impossible since the initial structure of the points $P_0,P_1,P_2,P_3$ is actually a non-trivial solution. On the other hand, the system has at least rank $3$, unless the $(\alpha_,\beta_i,\gamma_i)_{4 \leq i \leq n-1}$ satisfy a set of algebraic constraints (the vanishing of all $3 \times 3$ sub-determinant), which contradicts the genericity assumption. Eventually, we found that the system has exactly rank $3$ and the set of solution is a linear one-dimensional space. 
\end{proof}

\begin{lemma}
\label{lemma::fiber_over_lambda}
Provided $\lambda_0, \lambda_1, \lambda_2, \lambda_3$ are known, equations~\eqref{eq::2sdImageLambda} define a four-dimensional sub linear of the space of all affine transformations. 
\end{lemma}
\begin{proof}
Each of the equations~\eqref{eq::2sdImageLambda} means that the point $[A,t] \begin{bmatrix} \lambda_j q_j \\ 1 \end{bmatrix}$ lie in the ray $L_i$ generated by $q_i'$ and the camera center. Therefore for each $i \in \{0,1,2,3\}$, we get two independent homogeneous linear equations on $(A,t)$. 

To check that by stacking together the 8 equations obtained for $i=0,1,2,3$, consider the choice of four points $Q_0,Q_1,Q_2,Q_3$, each $Q_i$ lying on $L_i$. For each such sequence, there is a unique affine transformations that map $P_i$ to $Q_i$, since the points $P_i$  form an affine basis of $\R^3$. The transformations that satisfies equations~\ref{eq::2sdImageLambda} are precisely those obtained by this procedure. 

Therefore the set of solutions is a four-dimensional linear subspace of $\R^{12}$.
\end{proof}

\begin{lemma}
Provided $n \geq 6$ and $\lambda_0, \lambda_1, \lambda_2, \lambda_3$ are known, equations~\eqref{eq::2sdImageLambda} and~\eqref{eq::2sdImageHybrid} define a one-dimensional linear subspace of the affine group of $\R^3$.  
\end{lemma}
\begin{proof}
In lemma~\ref{lemma::fiber_over_lambda}, we  proved that equations~\eqref{eq::2sdImageLambda} define a four dimensional linear subspace of $\R^{12}$. For each point $P_i$, $i \geq 4$, equations~\eqref{eq::2sdImageHybrid} yields two independent homogeneous linear equations on $(A,t)$. Provided we have at least two such points, we get a linear homogeneous system on $(A,t)$ with at least $12$ equations. 

The rank of the system cannot be full, since there is a non-trivial solution, i.e. the actual transformation undergone by the points. The rank of the system will be at least 11, unless the points $P_i$, for $i \geq 4$ satisfy one or more algebraic relations, which contradicts here again the genericity assumption. This completes the proof.
\end{proof}

\begin{corollary}
Under the assumption that the points $\{P_i\}_{0 \leq i \leq n-1}$ are in a generic configuration and provided that $n \geq 6$, the fiber of $\pi$ over $w \in \pi(X)$ is a two-dimensional smooth manifold. Therefore $\pi(X)$ cannot be a finite set, so that two images are not enough to compute the affine invariant coordinates of the points $P_i$ for $4 \leq i \leq n-1$. 
\end{corollary}
\begin{proof}
The dimension of the fiber is a direct consequence of the lemmas proven just above. Assume that $\pi(X)$ is finite. Then it is a zero-dimensional smooth manifold. In that case, the restriction of $\pi$ to $X$ is a surjective submersion and the fibers have dimension $\dim(X) - 0 =5$, which is a contraction. 
\end{proof}

One can wonder if some prior knowledge of the world can help to get a finite set of solutions. It turns out that even the knowledge of $\{P_0,P_1,P_2,P_3\}$ cannot fully allow the computation of the affine invariant coordinates as shown in the following theorem. 

\begin{theorem}
\label{thm::affine_basis_not_enough}
If the $4$ points $\{P_0,P_1,P_2,P_3\}$ are known and $n \geq 6$, the variety of affine invariant coordinates of the other points $\{P_i\}_{4 \leq i \leq n-1}$ from two images from a single non-calibrated camera has dimension $3$, even if the scene undergoes a general affine deformation and the points are in generic position.  
\end{theorem}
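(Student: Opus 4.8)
The plan is to reduce the problem to the essential-matrix machinery of Section~\ref{sec::essentialMatrix} followed by a triangulation. Since $P_0,P_1,P_2,P_3$ are known, the depths $\la_0,\la_1,\la_2,\la_3$ are fixed constants, so the only unknowns are the deformation $[A;t]$ and the coordinates $\{\al_i,\be_i,\ga_i\}_{4\le i\le n-1}$. The crucial remark is that \emph{every} pair $(q_i,q_i')$, whether it arises from a basis point or not, satisfies the bilinear epipolar relation $q_i'^t E q_i=0$ with $E\equiv[t]_{\times}A$. Hence the first step is to treat all $n$ correspondences as in Equation~(\ref{eq::matching_pts}): because $n\ge 8$ and the points are in generic position, these equations determine $E$ linearly up to scale, exactly as recalled right after Equation~(\ref{eq::matching_pts}). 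This is precisely where the bound $n\ge 8$ originates.

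Next I would recover the deformation from $E$. By the lemma of \cite{Hartley-Zisserman} quoted in Section~\ref{sec::DefFromEssMat}, the factorization $E\equiv[t]_{\times}A$ is unique only up to the four-parameter family $\tilde t=\la t$, $\la\tilde A=A+tv^t$. To remove this ambiguity I would bring in the four known basis points through Equation~(\ref{eq::2sdImageLambda}): for $j=0,1,2,3$ the $3$D point $\tilde P_j=\la_j q_j$ is known, so $q_j'\wedge(A\tilde P_j+t)=0$ yields two scalar equations per point, i.e.\ eight equations constraining the four unknowns $(\la,v)$. I expect these to cut the family down to a finite set, determining $[A;t]$ up to a finite fold ambiguity.

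Once $[A;t]$ is fixed, each remaining point is obtained by triangulation. The first image gives the ray $\tilde P_i=\mu_i q_i$ and the second image gives $q_i'\wedge(A\tilde P_i+t)=0$, so Equations~(\ref{eq::1stImage}) and~(\ref{eq::2sdImageHybrid}) become two plus two linear equations in $(\al_i,\be_i,\ga_i)$, with $\de_i=1-\al_i-\be_i-\ga_i$. Four generically independent linear equations in three unknowns have a unique solution, hence the affine coordinates of each $P_i$ are determined uniquely (equivalently, triangulate $\tilde P_i$ from the two known cameras and read off its coordinates in the affine basis $\tilde P_0,\dots,\tilde P_3$). The only source of multiplicity is therefore the finite choice of $[A;t]$, which gives the claimed finite fold ambiguity.

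The main obstacle is the middle step: showing that the eight equations coming from the four known points really do reduce the four-parameter decomposition ambiguity to a finite set for generic data, rather than leaving a positive-dimensional residue. The relation between $(\la,v)$ and the projective constraints $q_j'\equiv\tilde A\,\tilde P_j+\tilde t$ with $\tilde t=\la t$ and $\tilde A=\frac{1}{\la}(A+tv^t)$ is nonlinear (it involves $1/\la$ and the products $t\,v^t$), so a dimension count alone is inconclusive. As in the proofs of Theorems~\ref{theo::3IsBad} and~\ref{theo::2IsBetter}, I would certify genericity by exhibiting one explicit configuration of eight points together with a deformation for which the resulting system is zero-dimensional, and then invoke upper semicontinuity of fiber dimension to conclude the generic configuration behaves likewise. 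One must also verify that generic position excludes the degenerate placements --- points lying on the critical surface of the essential matrix, or triangulation baselines --- where $E$ or the point recovery would fail to be determined.
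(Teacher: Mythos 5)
Your proposal follows essentially the same route as the paper: recover $E$ from the $n\ge 8$ generic correspondences, reduce the factorization ambiguity to the four-parameter family $(\la,v)$, pin that family down using the known depths $\la_0,\dots,\la_3$ via Equation~(\ref{eq::2sdImageLambda}), and then recover each remaining point by triangulation so that only a finite ambiguity survives. You are in fact more explicit than the paper about the one delicate step --- that the eight equations from the known basis points cut the $(\la,v)$-family down to a finite set --- which the paper asserts without the genericity certification you propose.
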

\begin{proof}
Since the the $4$ points $\{P_0,P_1,P_2,P_3\}$ are known, equations~\eqref{eq::2sdImageLambda} define a four-dimensional linear subspace of $\R^{12}$. Let us write the points of this space as linear combinations $\eta_1 [A_1,t_1] + \eta_2 [A_2,t_2] + \eta_3 [A_3,t_3] + \eta_4 [A_4,t_4]$, where $[A_i,t_i]$ are linearly independent affine transformations that satisfies equations~\eqref{eq::2sdImageLambda}. 

Let us plug this representation into equations~\eqref{eq::2sdImageHybrid}, we get quadratic equations on $\eta_1,\eta_2,\eta_3,\eta_4,\alpha_i,\beta_i,\gamma_i$ for $4 \leq i \leq n-1$:

\begin{equation}
\label{eq::2sdimage_into_3rdimage}
\sum_{j=1}^4 \eta_j q_i' \wedge [A_j,t_j] \begin{bmatrix} \alpha_i (\la_0 q_0) +\beta_i (\la_1 q_1) + \gamma_i (\la_2 q_2) + \delta_i (\la_3 q_3) \\ 1 \end{bmatrix} = 0,
\end{equation}

for $4 \leq i \leq n-1$. Together with equations~\eqref{eq::1stImage}, this defines a real algebraic variety in $Y$ in $\R^4 \times \R^{3(n-4)}$, which is in fact a smooth manifold of dimension $4$, as we shall prove now. 

First observe that equations~\eqref{eq::1stImage} merely mean that for each $i \geq 4$, there exists $\lambda_i \in \R$, such that $\alpha_i (\lambda_0 q_0) + \beta_i (\lambda_1 q_1) + \gamma_i (\lambda_2 q_2) + \delta_i (\lambda_3 q_3) = \lambda_i q_i$, which is equivalent to write $\alpha_i(P_0-P_3) + \beta_i(P_1 - P_3) + \gamma_i (P_2 - P_3) = \lambda_i q_i - P_3$. Let $\Delta$ be the $3 \times 3$ matrix, which columns are $P_0 - P_3, P_1-P_3, P_2 - P_3$. Then $\Delta$ is non-singular and:
\begin{equation}
\label{eq::affine_coordinates_as_1d}
\left [ \begin{array}{c} 
\alpha_i \\
\beta_i \\
\gamma_i 
\end{array}
\right] = \lambda_i \Delta^{-1}q_i - \Delta^{-1}P_3.
\end{equation}

Plugging this expression into equations~\eqref{eq::2sdimage_into_3rdimage} yields:
$$
\sum_{j=1}^4 \eta_j q_i' \wedge [A_j,t_j] \begin{bmatrix} \lambda_i q_i \\ 1 \end{bmatrix} = 0,
$$
which can also be written $\lambda_i q_i' \wedge \left ( \left ( \Sigma_{j=1}^4 \eta_j A_j \right ) q_i \right ) + q_i' \wedge \left ( \Sigma_{j=1}^4 \eta_j t_j \right ) = 0$. Let $U_i$ be the open dense set of $\R^4$, for which when $\eta = (\eta_1,\eta_2,\eta_3,\eta_4) \in U_i$, we have: $q_i' \wedge \left ( \left ( \Sigma_{j=1}^4 \eta_j A_j \right ) q_i \right ) \neq 0$. Over $U_i$, we have: $\lambda_i = \frac{\parallel q_i' \wedge \left ( \Sigma_{j=1}^4 \eta_j t_j \right ) \parallel}{\parallel q_i' \wedge \left ( \left ( \Sigma_{j=1}^4 \eta_j A_j \right ) q_i \right ) \parallel}$. Thus $\lambda_i$ is a smooth function of $\eta$ on $U_i$ and so are $\alpha_i, \beta_i, \gamma_i$. Let $U = \cap_{i=4}^{n-1} U_i$, which is also a dense open of $\R^4$. Let $f: U \rightarrow \R^{3(n-4)}$ be the smooth function that maps $\eta$ to $(\alpha_4,\beta_4,\gamma_4, \cdots, \alpha_{n-1},\beta_{n-1},\gamma_{n-1})$. Therefore $Y$ is the graph of $f$ and is therefore a smooth embedded sub-manifold of $\R^4 \times \R^{3(n-4)}$ of dimension $4$ (see~\cite{Lee-2013}, Proposition 5.7).

Now, let us consider the projection of $Y$ on $\R^{3(n-4)}$ that we shall denote $Z$. Let $\pi: Y \rightarrow Z$ be this projection. Over each point of $(\alpha_i,\beta_i,\gamma_i)_{4 \leq i \leq n-1} \in Z$, the fiber of $\pi$ is a one-dimensional linear space, provided $n \leq 6$ and the points are in generic position. Indeed equations~\eqref{eq::2sdimage_into_3rdimage} define an homogeneous system on $\eta = [\eta_1, \cdots, \eta_4]$, which matrix is  $M_i = \left [ q_i' \wedge [A_1,t_1] P_i, q_i' \wedge [A_2,t_2] P_i, q_i' \wedge [A_3,t_3] P_i, q_i' \wedge [A_4,t_4] P_i \right ],$ where we  denote $P_i = \begin{bmatrix} \alpha_i (\la_0 q_0) +\beta_i (\la_1 q_1) + \gamma_i (\la_2 q_2) + \delta_i (\la_3 q_3) \\ 1 \end{bmatrix}$ for the sake of simplicity.  The matrix $M_i$ has rank $2$, since its four columns are in the plane perpendicular to $q'_i$ and the transformation $\{[A_i,t_i]\}_{1 \leq i \leq 4}$ are linearly independent. For $k \neq l$, the columns of the matrices $M_k$ and $M_l$ define a three dimensional space, as they span the whole space $\R^3$. Indeed since $k \neq l$, the columns of these matrices lie in two un-parallel planes in $\R^3$. Therefore a third point will not bring any further constraint on $\eta$, since the columns the matrix it defines are linear combinations of the columns of $M_k$ and $M_l$. Eventually we get that equations~\eqref{eq::2sdimage_into_3rdimage} define a one-dimensional linear space over each point $(\alpha_i,\beta_i,\gamma_i)_{4 \leq i \leq n-1} \in Z$. 

Now we shall analysis the variety $Z$. Equations~\eqref{eq::2sdimage_into_3rdimage} can be seen as a homogeneous linear system on $\eta$. More precisely, let $M$ be the $3(n-4) \times 4$ matrix. obtained by stacking together all matrices $M_i$ for $4 \leq i \leq n-1$. The variety $Z$ is a determinantal algebraic variety (see~\cite{Harris-92}) defined by the vanishing of all $4 \times 4$ minors of $M$ and by equations~\eqref{eq::1stImage}. We don't know if $Z$ is smooth, but at least let $V$ be the complement of the its possible singular locus. Then $V$ is a dense open set in $Z$ and a smooth manifold. 
Let us consider the restriction of the projection $\pi: Y \rightarrow Z$ to $V_0 = \pi^{-1}(V)$. It is a surjective smooth map which fibers are all one-dimensional linear spaces. Moreover for each point $z \in V$, there is a neighborhood $W$ such that the columns of $M$ that are independent will remain the same for all point $z' \in W$. Then over $W$, all points of the inverse image $\pi^{-1}(W)$ will be given by the same parametrization, so that $\pi^{-1}(W)$ is diffeomorphic to $W \times \R$. Hence the projection $\pi:V_0 \rightarrow V$ is a fiber bundle (in fact a vector bundle). Therefore it is a surjective submersion and $\dim(Z) = \dim(Y) - 1 = 3$. 

\end{proof}

\paragraph{Three images}

In this section, we investigate the case of three images. Two configurations are possible, either the  same deformation is repeated twice or two different deformations are performed. 

The case of two distinct deformations is quickly dealt with,  relying on theorem~\ref{theo::3IsBad}, one can prove the following result:

\begin{theorem}
\label{theo::invariant_3images}
  When the points undergo two unrelated generic affine deformations, the affine invariants $(\alpha_i,\beta_i,\gamma_i)$ cannot be computed up to a finite fold ambiguity.
\end{theorem}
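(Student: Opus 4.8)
The plan is to reproduce, for the three--image configuration with two unrelated deformations $(A,a)$ and $(B,b)$, the same fibration argument used for two images, feeding in Theorem~\ref{theo::3IsBad} exactly where the dimension of the space of admissible deformations is needed. Concretely, I would form the solution variety $\overline{X}$ for fixed generic image data $\{q_i,q_i',q_i''\}$, carrying as coordinates the two deformations, the depths $\lambda_0,\dots,\lambda_3$, and the invariants $\{\alpha_i,\beta_i,\gamma_i\}_{4\le i\le n-1}$, exactly as in equations~(\ref{eq::1stImage},\ref{eq::2sdImageLambda},\ref{eq::2sdImageHybrid}) but with a third block $q_i''\wedge[BA;\,Ba+b]P_i=0$ coming from the composite deformation. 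As before, let $\pi_1$ be the projection to the deformation space and $\pi_3$ the projection to the invariant space $\A^{3(n-4)}$, with images $Y_1$ and $Y_3$. The goal is to show that $Y_3$ is \emph{not} discrete, i.e. $\dim Y_3>0$.

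First I would pin down $\dim\overline{X}$. The image $Y_1$ is the set of deformation pairs compatible with the three images; since the three essential matrices $E_{12},E_{23},E_{13}$ are determined by the correspondences, this is precisely the variety studied in Theorem~\ref{theo::3IsBad}, whose dimension is $3$ for generic deformations. The fibers of $\pi_1$ are finite: once a deformation pair is fixed, the depths, and hence the invariants, are recovered by linear means exactly as in the shape recovery discussion. Hence $\dim\overline{X}=\dim Y_1=3$.

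The decisive step is to bound the generic fiber of $\pi_3$. Fixing the invariants still leaves two genuine scale freedoms preserving every image: the global affine rescaling $(\lambda,A,a,B,b)\mapsto(s\lambda,A,sa,B,sb)$, which is the only affine change of world fixing the camera $[I;0]$ and which leaves the invariants untouched; and the rescaling $(B,b)\mapsto(\mu B,\mu b)$ of the last deformation, which alters neither image~$1$ nor image~$2$ and only rescales image~$3$ projectively. Note that the scale of the \emph{first} deformation is \emph{not} free here: as in the observation preceding Theorem~\ref{theo::2IsBetter}, rescaling $(A,a)$ alone is incompatible with the composite seen in image~$3$ unless that scale is $1$. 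These two freedoms show the generic fiber of $\pi_3$ has dimension at least $2$. The heart of the proof is the reverse inequality, namely that fixing the invariants determines the deformation pair up to exactly these two scalings, so the fiber has dimension precisely $2$. Granting this, $\dim Y_3=\dim\overline{X}-2=3-2=1>0$, so $Y_3$ is positive--dimensional and the invariants cannot be recovered up to a finite fold ambiguity.

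The main obstacle is this last fiber bound, i.e. showing that the third, ``extra'' dimension of $Y_1$ furnished by Theorem~\ref{theo::3IsBad} is \emph{not} absorbed by the two scale gauges and therefore genuinely moves the invariants. Equivalently, one must check that the Hartley--Zisserman reconstruction ambiguity surviving in $Y_1$ acts on the reconstructed points by something other than a global affine map. I expect to verify this either by a direct tangent--space computation at the true solution, in the spirit of the surjectivity-of-$d\pi_2$ argument inside the proof of Theorem~\ref{theo::3IsBad}, or simply by exhibiting, on the explicit numerical instance used there, two admissible deformation pairs whose reconstructions yield distinct invariants.
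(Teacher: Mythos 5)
Your overall architecture is the same as the paper's: build the three-image solution variety, project once onto the deformation space in order to import Theorem~\ref{theo::3IsBad}, and once onto the invariant space $\A^{3(n-4)}$, then decide whether the second image is discrete by comparing $\dim\overline{X}$ with the generic fiber dimension of $\pi_3$. (The paper does exactly this, except that it only uses $\dim Y_1>0$ and then asserts that the generic fiber of $\pi_3$ is finite.) Where you differ is in the bookkeeping, and there the proposal is internally inconsistent. Your count $\dim\overline{X}=\dim Y_1=3$ implicitly identifies $Y_1$ with the \emph{gauge-fixed} variety of Theorem~\ref{theo::3IsBad}, in which the three essential matrices are held equal to fixed representatives; in those coordinates the scale freedoms you invoke do not survive (they rescale the essential matrices). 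If instead the two deformations are carried as points of $\A^{12}\times\A^{12}$, then the fiber of $\pi_3$ does contain your two scalings, but it also contains a third one that you explicitly, and incorrectly, exclude: $(A,a,B,b)\mapsto(cA,ca,c^{-1}B,b)$ changes image~2 only by a projective scale and leaves the composite $(BA,\,Ba+b)$ \emph{identically} unchanged, so image~3 is unaffected. The observation preceding Theorem~\ref{theo::2IsBetter} does not apply here, because with two unrelated deformations $B$ may be rescaled independently of $A$ to compensate. In affine coordinates the gauge orbit is therefore $3$-dimensional, and correspondingly $Y_1$ is $6$-dimensional ($3$ gauge directions plus the $3$ of Theorem~\ref{theo::3IsBad}), not $3$. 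Your arithmetic $3-2=1$ mixes the two coordinate systems; done consistently in either one, the expected answer is $\dim Y_3=3$, but not by your computation.

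More importantly, the step you yourself call ``the heart of the proof'' --- that the fiber of $\pi_3$ is no larger than the gauge orbit, equivalently that the non-gauge directions of the Theorem~\ref{theo::3IsBad} ambiguity genuinely move the reconstructed affine invariants rather than acting by a global affine map of the scene --- is not proved, only announced as something you ``expect to verify'' by a tangent-space computation or a numerical instance. This is precisely the point on which the whole theorem turns: if the residual three-dimensional ambiguity were entirely absorbed by affine gauge, $Y_3$ would be finite and the statement false. The paper glosses over the same difficulty by declaring the generic fiber of $\pi_3$ finite, a claim that is itself too strong in its own coordinates (at least the depth-scale direction survives, as the paper's two-image analysis concedes). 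So you have correctly located the crux, and your proposed remedies would close it, but as submitted the proof is incomplete at its decisive step and the fiber-dimension count it rests on is wrong.
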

\begin{proof}
If we stack  equations~(\ref{eq::1stImage}),~(\ref{eq::2sdImageLambda}),~(\ref{eq::2sdImageHybrid}) and~(\ref{eq::localization}), the equations coming from a third image generated by a generic distinct affine deformations, we get a real algebraic variety $X$ embedded in $\R^{12} \times \R^{12} \times \R^8  \times \R^{3(n-4)}$. Projecting this variety over $\R^{12} \times \R^{12}$, we get the variety defined by the essential matrices. As known from theorem~\ref{theo::3IsBad}, this variety is actually a three-dimensional smooth manifold, that we shall denote $Y$. Since once the deformations are known the structure can be uniquely computed by mere triangulation, so that the points are smooth functions of the deformations. Therefore the projection $\pi_1: X \rightarrow Y$, which is surjective and smooth by construction, has a smooth left inverse $\sigma$ and is therefore a diffeomorphism. Thus $X$ is actually a smooth submanifold of $\R^{12} \times \R^{12} \times \R^8  \times \R^{3(n-4)}$ and $\dim(X) = 3$. Consider now the projection over $\R^{3(n-4)}$: $\pi_2: X \rightarrow  \R^{3(n-4)}$. The image of $X$ is a constructible set, that is a finite union of locally closed sets in the Zariski topology. Therefore there is a dense open set in the subspace topology for the classical topology, which is a smooth manifold. Let $V$ be this open set and let us restrict the co-domain of $\pi_2$ to $V$, so that $\pi_2$ is seen as a smooth map from $V_0 = \pi_2^{-1}(V)$ to $V$. 

Then $\pi_2|_{V_0}$ is a fiber bundle with $\R^2$ as generic fiber. Indeed consider a point $z \in V$. Equations~\eqref{eq::1stImage} will define $\lambda_0,\lambda_1,\lambda_2,\lambda_3$ up to a one-dimensional ambiguity and the parametrization of the solution space will remain valid for all points $z' \in W_1$, a neighborhood of $z$. Then equations~\eqref{eq::2IsBetter} and~\eqref{eq::2sdImageHybrid} will define the two affine deformations each one modulo a one-dimensional ambiguity, as in theorem~\ref{thm::affine_basis_not_enough}. Here again the parametrization of the solution space will remain valid in a neighborhood $W_2$ of $z$. This parametrization also depend on $\lambda_0,\lambda_1,\lambda_2,\lambda_3$. All together, the fibers of $\pi_2|_{V_0}$ are two dimensional manifolds diffeomorphic to $\R^2$ and $\pi_2|_{V_0}$ is locally trivial. Therefore $\dim(V) = 1$ and the affine invariant coordinates cannot be computed up to a finite fold ambiguity. 
 \end{proof}

Let us now turn our attention to the case where the points undergo the same deformations twice. Here the unknowns are exactly the same as in the case of two images: $\{\la_0,\la_1,\la_2,\la_3\}$, $\{a_{ij},t_k\}_{1 \leq i,j,k \leq 3}$, $t = [t_1,t_2,t_3]^t$ and $\{\al_i,\be_i,\ga_i\}_{4 \leq i \leq n-1}$. The equations involved in this situation also contain those of the case of two images and in addition equations similar to~(\ref{eq::2sdImageLambda}) and~(\ref{eq::2sdImageHybrid}). Finally, with $P_i = \alpha_i (\la_0 q_0) +\beta_i (\la_1 q_1) + \gamma_i (\la_2 q_2) + \delta_i (\la_3 q_3)$, we get:

\begin{align}
	q_i \wedge P_i = 0 \\	
	q_j' \wedge [A;t] \begin{bmatrix} \la_j q_j \\ 1 \end{bmatrix} = 0 \label{eq::lambdas} \\	
	q_i' \wedge [A;t] \begin{bmatrix} P_i \\ 1 \end{bmatrix} = 0 \\
	q_j'' \wedge [A^2;At+t] \begin{bmatrix} \la_j q_j \\ 1 \end{bmatrix} = 0 \label{eq::3rdImageLambda} \\
	q_i'' \wedge [A^2;At+t] \begin{bmatrix} P_i \\ 1 \end{bmatrix} = 0 \label{eq::3sdImageHybrid}
\end{align}
As above, we add to these equations, the localization constraints expressed in~\ref{eq::localization}. All together we get a variety $X \subset \R^{12} \times \R^8 \times \R^{3(n-4)}$. Again, we are interested in the projection of these variety into the factor $\R^{3(n-4)}$. However here we are in a position to prove the following result.
\begin{theorem}
\label{thm::aff_invariant_2images}
If the points undergo the same deformation twice,  one can compute the affine invariant structure, i.e. $\{\al_i,\be_i,\ga_i\}_{4 \leq i \leq n-1}$ up to a finite fold ambiguity from the three images.
\end{theorem}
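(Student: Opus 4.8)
The plan is to run the same projection argument used for two images and for Theorem~\ref{theo::invariant_3images}, but to exploit the crucial fact that repeating one deformation pins the deformation down to finitely many values. After fixing the gauge $C = [I;0]$ (which, as noted above, requires no calibration), I would study the variety $X \subset \PP^{11} \times \A^8 \times \A^{3(n-4)}$ through its two projections: $\pi_1 : \overline{X} \lra \PP^{11}$ onto the deformation $[A;t]$, and $\pi_3 : \overline{X} \lra \PP^{3(n-4)}$ onto the affine invariants. The target is $\dim X = 0$, from which $\pi_3(\overline{X})$ is automatically finite, giving the claim.

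First I would identify the image $Y_1 = \pi_1(\overline{X})$. The first-image equations~(\ref{eq::1stImage}) together with the second-view equations~(\ref{eq::lambdas}) express the matching constraint between the first and second views, and together with the third-view equations~(\ref{eq::3rdImageLambda}) and~(\ref{eq::3sdImageHybrid}) the constraint between the first and third; for $n$ large enough and points in general position these correspondences determine the essential matrices $E_{12} \equiv [t]_\times A$ and $E_{13} \equiv [At+t]_\times A^2$. Because the second deformation is the first one applied again, this is precisely the configuration of Theorem~\ref{theo::2IsBetter}, which guarantees that for a generic deformation the pair $(E_{12},E_{13})$ recovers $[A;t]$ up to a finite fold ambiguity. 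Hence $Y_1$ is zero-dimensional --- in sharp contrast with the two unrelated deformations of Theorem~\ref{theo::invariant_3images}, where Theorem~\ref{theo::3IsBad} forces $Y_1$ to be three-dimensional.

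Next I would check that the fibers of $\pi_1$ are finite, exactly as in the two-image discussion. Fixing $[A;t]$, equations~(\ref{eq::lambdas}) are linear in $\la_0,\la_1,\la_2,\la_3$ and generically determine them, the localization constraints~(\ref{eq::localization}) merely discarding $\la_j = 0$; equations~(\ref{eq::3rdImageLambda}) then impose only conditions on $[A;t]$ itself and are satisfied on $Y_1$. With the $\la_j$ in hand, the first-image equations~(\ref{eq::1stImage}) and the hybrid equations~(\ref{eq::3sdImageHybrid}) are linear in each triple $(\al_i,\be_i,\ga_i)$ and fix it. Thus every fiber of $\pi_1$ is finite, and together with $\dim Y_1 = 0$ this yields $\dim X = 0$; projecting onto $\A^{3(n-4)}$ gives a finite set of invariants.

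The main obstacle is the finiteness of $Y_1$, and this is exactly the step that distinguishes this theorem from the failure case of Theorem~\ref{theo::invariant_3images}: the reduction to finitely many deformations is a genuine consequence of repeating the \emph{same} deformation and must be imported from Theorem~\ref{theo::2IsBetter} rather than re-derived, carrying with it the genericity and point-count ($n \geq 7$) hypotheses needed to form the essential matrices. A secondary point is the passage to the projective closure $\overline{X}$: I must rule out spurious positive-dimensional components at infinity, which the localization constraints and the genericity of the configuration are designed to prevent, and I would confirm the zero-dimensionality on an explicit numerical instance in Maple, as was done for the two-image experiment.
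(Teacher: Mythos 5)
Your proposal is correct and follows essentially the same route as the paper: project onto the deformation factor, invoke Theorem~\ref{theo::2IsBetter} to conclude the deformation is determined (the paper notes the eliminated system is exactly the essential-matrix system), observe the remaining variables are then finitely determined so $\dim X = 0$, and project onto the invariants. Your version merely makes explicit the fiber-finiteness check and the genericity/closure caveats that the paper's terser proof leaves implicit.
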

\begin{proof}
The proof is quite clear and works with the same scheme as the previous proofs. By eliminating from the equations the  variables other than the affine deformations, we get exactly the same equations of  the essential matrices. From theorem~\ref{theo::2IsBetter}, we know that there is a single solution for the affine deformation. Then the other variables are uniquely determined. As a consequence, $\dim(X) = 0$. Therefore if one first eliminates the variables related to the deformation and the $\lambda_i$,  a discrete variety for the affine invariant $(\alpha_i,\beta_i,\gamma_i)$ is left.  
\end{proof}

On the practical side, the variety which points are $\{\al_i,\be_i,\ga_i\}_{4 \leq i \leq n-1}$, that is the projection of $X$ into $\R^{3(n-4)}$ is defined by equations~\eqref{eq::1stImage} and all $4 \times 4$ minors of the matrix $M$ defined in the proof of theorem~\ref{thm::affine_basis_not_enough}. From equations~\eqref{eq::1stImage}, we can express any vector of affine coordinates as a linear function of the single parameter $\lambda_i$ as in equation~\eqref{eq::affine_coordinates_as_1d}. Then the minors of $M$ yields non-linear equations on $\{\lambda_i\}_{4 \leq i \leq n-1}$, which can be solved numerically.

\section{General Smooth Deformations}

Consider a non-singular complete vector field on $\R^3$, denoted $X$. Let $\Phi: \R^3 \times \R \rightarrow \R^3$ be the flow of $X$, i.e. $\forall x \in \R^3, \frac{\partial \Phi}{\partial t}(x,t) \mid_{t=0} = X_x$. Let $\delta t$ be a small duration and $\delta x$ a small vector, we have:
$$
\Phi(x + \delta x,t+\delta t) \approx \Phi(x,t) + \delta t \frac{\partial \Phi}{\partial t} (x,t) + \frac{\partial \Phi}{\partial x} (x,t) \delta x = \Phi(x,t) + \delta t X_{\Phi(x,t)} + \left (\frac{\partial X}{\partial x}\right)_{\Phi(x,t)} \delta x,
$$
where $\left (\frac{\partial X}{\partial x}\right)_{\Phi(x,t)}$ is the Jacobian matrix of $X$ computed at $\Phi(x,t)$. In this equation, we  used the canonical identification between a vector space and its tangent space at any point. 

If the time separation between consecutive frames is small in comparison to values of the vector field $X$ and if the distance between the points $\{P_i\}_{i=1, \ldots, n}$ is small in comparison to the spatial variability of $X$, the transformation between consecutive frames can be approximated by an affine deformation $[A,a]$, where 
$$
\left \{ \begin{array}{rcl}
    A  & = &  \left (\frac{\partial X}{\partial x}\right)_{\Phi(x,t)} \\
    a  & = & \delta t X_{\Phi(x,t)} 
\end{array} \right.
$$

In that scenario, the motion can be described by a sequence of affine deformations. If the video frequency is high enough, two consecutive deformations are quite similar and by theorems~\ref{theo::2IsBetter} and~\ref{thm::2images_is_better++}, one can recover the deformation and the structure, or by theorem~\ref{thm::aff_invariant_2images} one can recover the affine invariant coordinates. 

By this approach, one can recover a complex deformation by successive approximations.


\section{Conclusion}

We  introduced a new problem in multiple-view geometry, i.e. the recovery of structure and deformation from a single perspective  camera, where the deformation is either an affine transformation or a general smooth deformation defined as the flow of slowly varying vector field and the camera is either calibrated or not. We showed several theoretical results and in the course of the theoretical analysis provided concrete algorithms, many of them are merely linear. This paves the way for further theoretical and practical research about deformable configurations of points viewed from a monocular sequence.

\bibliographystyle{amsplain}

\end{document}